\ificcvfinal\pagestyle{empty}\fi
\begin{document}

\title{Learning Bias-Invariant Representation by Cross-Sample Mutual Information Minimization}

\author[1]{Wei Zhu}
\author[1]{Haitian Zheng}
\author[2]{Haofu Liao \thanks{This work was done when Haofu Liao was at the University of Rochester.}}
\author[1]{Weijian Li}
\author[1]{Jiebo Luo}
\affil[1]{University of Rochester}
\affil[2]{Amazon Web Services}
\affil[ ]{\textit {\{wzhu15,hzheng15,wli69\}@ur.rochester.edu, liaohaofu@gmail.com, jluo@cs.rochester.edu}}

\maketitle

\ificcvfinal\thispagestyle{empty}\fi

\begin{abstract}
Deep learning algorithms mine knowledge from the training data and thus would likely inherit the dataset's bias information. As a result, the obtained model would generalize poorly and even mislead the decision process in real-life applications. We propose to remove the bias information misused by the target task with a cross-sample adversarial debiasing (CSAD) method. CSAD explicitly extracts target and bias features disentangled from the latent representation generated by a feature extractor and then learns to discover and remove the correlation between the target and bias features. The correlation measurement plays a critical role in adversarial debiasing and is conducted by a cross-sample neural mutual information estimator. Moreover, we propose joint content and local structural representation learning to boost mutual information estimation for better performance. We conduct thorough experiments on publicly available datasets to validate the advantages of the proposed method over state-of-the-art approaches. 
\end{abstract}
\section{Introduction}

\begin{figure}
\centering
\includegraphics[width=0.5\textwidth]{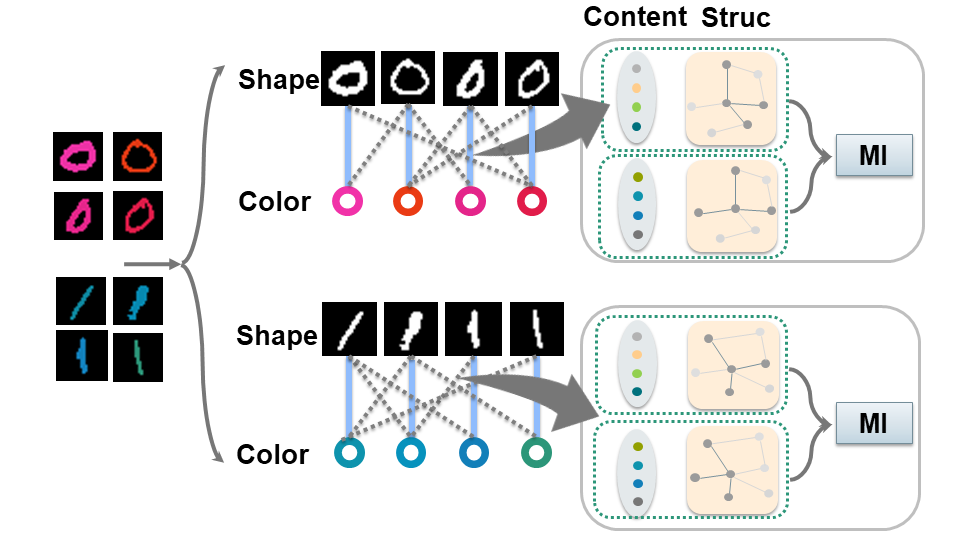}
\caption{A brief illustration of CSAD for a color-biased binary classification task. Our objective is to obtain a color-invariant digit classifier. Given the $i$-th training sample $x_i$ as a red ``0", most existing methods eliminate the correlation between $h_i$ and red information from the $i$-th sample (solid line), where $h_i$ is the extracted features. By contrast, CSAD could reduce the correlation between $h_i$ and various red colors extracted from other samples. A cross-sample mutual information estimator measures the correlation with joint content and local structural representation. }
\label{fig:demo}
\end{figure}

Modern machine learning is built on collected and contributed data. However, real-world data inevitably contains noise and bias and may not be well-distributed. Such flawed datasets may make the learned model unreliable and pose threats to the learned model's generalization capacity to unseen data. This problem is particularly crucial for medical and healthcare-related applications~\cite{chen2019developing}. For example, Parkinson's Disease (PD) is associated with age, and the PD patients are primarily composed of older people in the related datasets~\cite{schwab2019phonemd,bot2016mpower,li2020predicting}. A model learned on these datasets may predict PD by the age of patients instead of the symptoms of the disease. As a result, the age bias renders the learned model hardly useful for real-life disease diagnosis and analysis.

Several methods have been proposed to learn to remove the dataset bias~\cite{kim2019learning,alvi2018turning,bsdfhgdgfh,zhang2018mitigating,nam2020learning,bahng2020learning}. Among them, some methods regularize the model to not learn bias with additional regularization terms~\cite{nam2020learning,bahng2020learning}, and others learn to eliminate the learned bias information by adversarial learning \cite{kim2019learning, bsdfhgdgfh,zhang2018mitigating}. Our work follows the latter and the bias elimination is often conducted by minimizing the correlation between the extracted features and the bias label. The correlation measurement plays a critical role in adversarial debiasing and is often fulfilled by recently proposed neural mutual information estimators \cite{ge2020mutual,belghazi2018mutual,DBLP:conf/iclr/HjelmFLGBTB19}. In particular, \cite{kim2019learning} proposes to adversarially discover and remove the bias information by adding a gradient reversal layer between the feature extractor and the bias branch. \cite{bsdfhgdgfh} adversarially learns to mitigate the bias by minimizing the mutual information between the latent representation and the bias label. To conclude, they essentially learn to remove the bias information from the target classifier by eliminating the dependency between the target and the bias information from same training sample.   
And as a result, these methods are limited to model and reduce the correlation within each training sample and totally neglect the rich cross-sample information. However, we note that the cross-sample information is important and necessary to be taken into consideration for debiasing.  For example, as shown in Fig. (\ref{fig:demo}), given the $i$-th sample $x_i$ as a red ``0", it is not enough to only eliminate the correlation between $h_i$ and the red representation extracted from $x_i$, as the correlation between $h_i$ and the color representation extracted from other red digits will be preserved, where $h_i$ is the representation of $x_i$. That is, $h_i$ may still be highly biased and is correlated to pink, rose, ruby, etc. The neglect will pose a grave threat to the reliability of the learned correlation measurement and eventually leads to sub-optimal performance for debiasing. Moreover, although the local structural representation is proven to be helpful for correlation learning~\cite{zhang2019adaptive,aziere2019ensemble}, it is also hard to be incorporated with existing methods~\cite{kim2019learning,zhang2018mitigating,alvi2018turning}.


To address the issues as mentioned above, we propose a cross-sample adversarial debiasing (CSAD) method as shown in Fig. \ref{fig:demo}. To make it possible to utilize the cross-sample and structural information, inspired by recent progress on domain adaptation~\cite{peng2019federated,peng2019domain}, CSAD first explicitly disentangles the target and the bias representation. Then, CSAD relies on a cross-sample neural mutual information estimator for correlation measurement, which is conducted on the disentangled bias and target representation. This could also avoid potential problems caused by the domain gap between the latent representation and the bias label used by other methods \cite{kim2019learning,bsdfhgdgfh}. With the cross-sample information, CSAD could comprehensively eliminate correlation between target and bias information from different samples. Additionally, the explicit disentanglement makes it possible to consider the local structural representation for mutual information estimation. Specifically, we encourage the bias and target representation to have different topological structures captured by Random Walk with Restart \cite{tong2006fast}, which could avoid the bias information of certain sample to be guessed by its neighbors.


We highlight our main contributions as follows:
\begin{enumerate}
    \item We propose a flexible and general framework for adversarial debiasing that can explicitly disentangle target and bias representation.
    \item Based on the proposed framework, we propose cross-sample adversarial debiasing (CSAD). CSAD eliminates the bias information by a cross-sample mutual information estimator that can jointly exert cross-sample content and structural features. 
    \item We conduct extensive experiments on benchmark datasets, and our method achieves substantial improvement compared to the current state-of-the-art methods.
\end{enumerate}

\section{Related Work}

\subsection{Debiasing and Fairness}
Biases exist across race, gender, and age, and they pose threats to machine learning models in diverse tasks, such as image classification~\cite{geirhos2018imagenet,robinson2020face,dhar2020adversarial,choi2019can,wang2019learning}, representation learning~\cite{lahoti2019operationalizing,lahoti2019ifair,moyer2018invariant,creager2019flexibly}, word embedding~\cite{zhao2019gender,bolukbasi2016man} and visual question answering~\cite{clark2019don}. A straightforward way to address the problem is to collect~\cite{panda2018contemplating} or synthesize more data to balance the training set~\cite{geirhos2018imagenet,shetty2019not,ray2019sunny}. However, unbiased data could be expensive to collect and impractical to generate for general tasks.  
 
Other methods alleviate the bias through the learning process. Alvi \textit{et al.} avoid to learn the bias by a maximum cross-entropy term~\cite{alvi2018turning}. SenSR adopts a variant of individual fairness as a regularizer so that the learned model could satisfy the individual fairness~\cite{yurochkin2019training}. Similarly, SenSeI achieves the individual fairness through a transport-based regularizer~\cite{yurochkin2020sensei}. Zafar \textit{et al.} develop fairness methods based on the decision boundary fairness~\cite{zafar2017fairness}. DRO regularizes the model by considering worst group performance \cite{sagawa2019distributionally}. Learned-Mixin encourages the model to focus on different patterns with a ensemble framework \cite{clark2019don}. ReBias solves a min-max game to encourage the independence between network and biased prediction \cite{bahng2020learning}. Besides, adversarial learning is also adopted for debiasing, and most methods utilize a discriminator to predict the bias label or to estimate the dependency between latent representation and the bias label. For example, Zhang \textit{et al.} trains the discriminator for bias label with the soft assignment of target \cite{zhang2018mitigating}. Kim \textit{et al.} unlearns the bias extracted by a bias predictor by adopting a gradient reversal layer \cite{kim2019learning}. Ruggero \textit{et al.} learns to minimize the mutual information between the latent representation and the bias labels \cite{bsdfhgdgfh}. 
Few of these methods consider cross-sample and structural information that we discussed above. 

Recently, several works focus on learning with weak or even no bias supervision~\cite{lahoti2020fairness,nam2020learning,wang2019learning,lahoti2020fairness}. Learning from Failure (LfF) puts more weight on failure samples \cite{nam2020learning}. Adversarial Reweighted Learning (ARL) \cite{lahoti2019ifair} adversarially learn a distribution of hard samples. These models may not be robust as they also tend to overfit the noisy samples~\cite{lahoti2019ifair}, and the practical significance still need to be validated.

\subsection{Mutual Information Estimation}
Mutual information is used to measure the dependency between random variables. As the exact value of mutual information is prohibitive to calculate for large scale data, several papers apply neural networks for efficient mutual information estimation \cite{ge2020mutual,belghazi2018mutual}. For random variables $X$ and $Z$, we denote the product of the margin distribution as $P_XP_Z$ and the joint distribution as $P_{XZ}$, and the mutual information between $X$ and $Z$ could be estimated by a neural network $M$ by training it to distinguish the samples drawn from the joint distribution $P_{XZ}$ and those drawn from the product of marginal distribution $P_XP_Z$, e.g., MINE~\cite{belghazi2018mutual}, Deep InfoMax \cite{hjelm2018learning}, etc.
Since we are not interested in the exact value of the mutual information, a lower bound of the mutual information derived from Jensen Shannon Divergence could be formulated as~\cite{hjelm2018learning}
\begin{equation} \label{eq:mijsd}
\begin{split}
    I_{\text{JSD}}(X,Z) = &\sup  E_{P_{XZ}} [-\text{sp}(-M(x,z))] \\ &-E_{P_XP_Z}[\text{sp}(M(x,z))],
\end{split}
\end{equation}
where $\text{sp}(x)=\log(1+\exp(x))$ is the softplus function and $M$ is a neural network.
Information Noise-Contrastive Estimation (Info NCE) \cite{oord2018representation} is also proposed as a normalized mutual information estimator~\cite{poole2019variational}.

\section{Problem Statement}  
Formally, given the $i$-th sample from the training dataset as $(x_i, y_i,b_i)$, where $x_i$ is the input data, $y_i$ is the groundtruth label for the target task, and $b_i$ is the bias label, we first train a target classifier with a feature extractor $F$ that outputs the latent base representation $h_i$ to maximize the performance on the target task. However, since there are correlations between the target task label $y$ and the bias label $b$, the target task is likely to rely on the bias information to fulfill its objective, and as a result, the optimized model would generalize poorly on unseen bias-free data. In this paper, we aim to remove the bias information extracted by $F$ while preserving the performance for the target task. As a example, for the colored MNIST dataset shown Fig.~\ref{fig:demo}, different digits will be class-wisely painted with similar color in the training set, \textit{e.g.}, red for zero and green for one. The vanilla digit classifier would likely learn to predict digits based on its color. The objective of debiasing is to make the representation $h$ contain only shape information with no color information. In other words, the digit classifier would rely on shape instead of color to fulfill the task and thus would finally lead to a more practical model.
\begin{figure}
\centering
\includegraphics[width=0.48\textwidth]{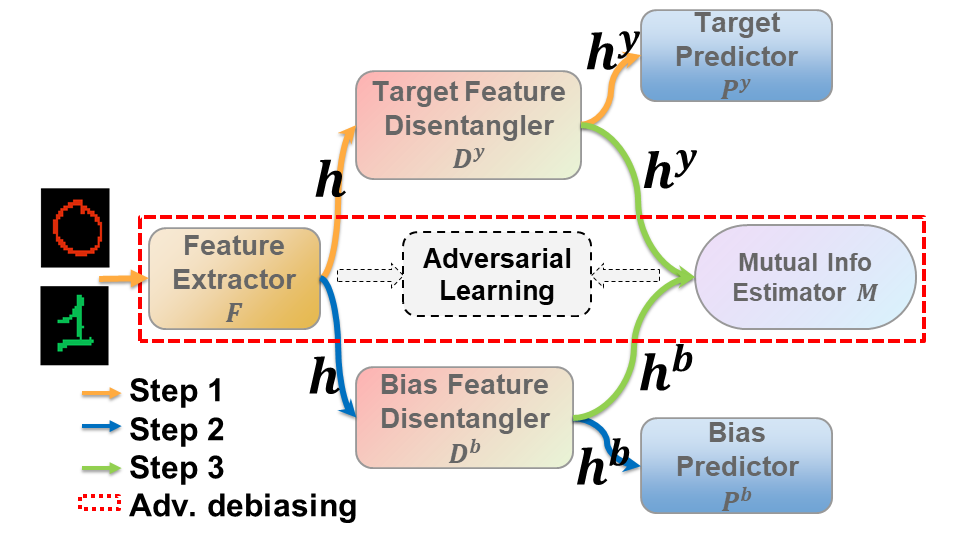}
\caption{Block diagram of CSAD for the Colored MNIST dataset, and our objective is to obtain color-invariant digit classifier. 
}
\label{fig:dig1d}
\end{figure}
\section{Method}
Our method is illustrated in Fig. (\ref{fig:dig1d}) and is detailed in Algorithm (\ref{algdfd}). Our basic idea is to minimize the correlation between the disentangled bias and target representation with a mutual information estimator. We briefly introduce the training procedure as follows: we first pretrain a target classifier composed of $F$, $D^y$, and $P^y$ until convergence. The output of $F$ is denoted as the base representation $h$ and the output of $D^y$ is target representation $h_y$ (Step 1); then, we extract the bias representation $h_b$ from $h$ by a bias classifier composed of $D^b$ and $P^b$. Note we do not update $F$ for bias predication (Step 2); then, we learn the correlation between the bias $h^b$ and the target representations $h^y$ by optimizing the mutual information estimator $M$ (Step 3); at last, we update $F$ to minimize the mutual information estimated by $M$ (Adv. debiasing).

Note that the feature extractor $F$ is only updated to minimize the mutual information estimated by $M$ (Adv. debiasing) and the target prediction loss (Step 1); it is thus forced to generate bias-invariant features that are still powerful enough for the target task. As a result, after training, the latent representation $h$ extracted by $F$ will contain little information on bias. The bias feature disentangler $D^b$ thus cannot extract useful information $h^b$ that could enable mutual information estimator to associate $h^y_i$ with $h^b_j$ where the $i$-th and $j$-th samples share similar bias. The bias branch and mutual information estimator will be discarded at testing time.



\subsection{Cross-Sample Adversarial Debiasing}

One of the most critical part of our method is to measure the correlation between the target $h^y$ (shape) and bias $h^b$ (color) representations which is fulfilled by a neural mutual information estimator. To conduct the neural estimation, we first need to define the joint and product of marginal distributions between $h^y$ (shape) and $h^b$ (color). As discussed in~\cite{tschannen2019mutual}, there is a close relation between mutual information estimation and metric learning. For ease of presentation, we denote the samples drawn from the \textit{joint distribution} $P_{h^y h^b}$ as \textit{positive pairs} and those drawn from the \textit{product of marginal distribution} $P_{h^y}P_{h^b}$ as \textit{negative pairs}. According to recent literature, the pair construction, \textit{i.e.}, the definition of $P_{h^yh^b}$ and $P_{h^y}P_{h^b}$, plays a critical role in mutual information estimation~\cite{bachman2019learning}. Usually, the positive pairs could be intuitively constructed by matching different representations of the same sample, while the negative pairs are generated by matching the representation of different samples. For example, Deep InfoMax constructs the positive pairs by matching the global feature and local feature from the same sample~\cite{DBLP:conf/iclr/HjelmFLGBTB19}; Contrastive Multi-view Coding (CMC) regards different views of the same sample as positive pairs~\cite{tian2019contrastive}; Contrastive Predictive Coding (CPC) exerts the consecutive of sequential data to construct the positive pairs~\cite{oord2018representation}.

Specifically for debiasing, it is common to have the target representation and the bias label of same sample as the positive pairs \cite{kim2019learning,bsdfhgdgfh}. However, given the i-$th$ sample as a red ``0", these methods only try to extract and reduce the dependency between the target representation $h_i^y$ and the color label ``red", and neglect the dependency between $h_i^y$ and other biases that are similar to the ``red" and can be extracted from other samples e.g., pink, rose, light red, and also variants of ``red" from other samples. The debiasing models trained without considering these cross-sample information will then omit the potential target-bias correlation, and eventually lead to suboptimal performance. However, it is hard to utilize the cross-sample relation between data especially with same bias label by existing methods~\cite{kim2019learning,bsdfhgdgfh}, since they directly reduce the mutual information between target representation and bias label. By contrast, CSAD conducts debiasing on feature level and explicitly disentangles the target $h^y$ and the bias $h^b$ representation. Then the joint and product of marginal distribution could be easily defined in an across-sample way for debiasing. That is, the positive pairs for the $i$-th sample are constructed by the target representation $h_i^y$ and bias representation $h^j_b$, where $(i,j)\in \Omega$ is a set of pairs which share similar bias and $i$ is not necessary to be equal to $j$. Moreover, as shown in Section \ref{sec:struc}, the disentanglement framework also makes it possible to consider the structural representation. 

With the definition of the positive and negative pairs, we can conduct neural mutual information estimation by existing estimators e.g., MINE and $I_{JSD}$. However, these estimators are developed under the assumption that there is only one positive pair per sample, and are thus suboptimal for our case. To consider the multiple positive pairs for each sample, inspired by recent progress on metric learning \cite{wang2019multi}, we propose a cross-sample mutual information estimator for adversarial debiasing as follows
\begin{equation} \label{eq:ourmi}
\begin{split}
     I_{\text{CS}}(h^y, h^b) = \sup  &-\log (1+E_{(i,j)\in \Omega} \exp({-M (h^y_i,h^b_j)})) \\
     - & \log (1+ E_{(i,j)\notin \Omega} \exp({M (h^y_i,h^b_j)})),
\end{split}
\end{equation}
where $M (h^y_i,h^b_j)$ is implemented by a neural network detailed in Sec. \ref{sec:feature_learning}, and its output denotes the correlation between $h^y_i$ and $h^b_j$. 

We here draw several points for CSAD. First, comparing with existing methods \cite{kim2019learning,bsdfhgdgfh}, CSAD relies on a disentanglement framework and conducts mutual information estimation on feature level $h^y$ and $h^b$ instead of using the bias label directly. This makes it possible to consider cross-sample correlation and avoid the potential domain gap. Second, the positive pairs (joint distribution) of CSAD are constructed by  $h^y_i$ and $h^b_j$, where $(i,j) \in \Omega$ and $i$ is not necessary to be equal to $j$. Third, the feature-level debiasing framework also makes it possible to take the local structure of each sample into consideration for mutual information estimation as described later in Section \ref{sec:struc}, and jointly adopts content and structural representation yields better performance as shown in our experiments. At last, it is easy to see that the proposed $I_{CS}$ is a lower bound for $I_{JSD}$ shown in Eq.~(\ref{eq:mijsd}). The proof is simple and is based on Jensen's inequality and the convexity of $-\log(1+x)$ \cite{boyd2004convex}. Refer to supplemental material for detail. As claimed by \cite{tschannen2019mutual}, maximizing a tighter bound on mutual information does not always lead to better performance. Compared with $I_{JSD}$, the proposed $I_{CS}$ can automatically reweight training samples~\cite{wang2019multi}, and shows superior performance in practice according to our experiments.

\subsection{Representation Learning for MI Estimation} \label{sec:feature_learning}
This section provides the detail for the implementation of mutual information estimator $M$ with content and local structural representation. 

\subsubsection{Content Representation Learning} \label{sec:content}
We first demonstrate the proposed $I_{CS}$ with content representation. We implement $M$ with two branches as $M =\lbrace M_y, M_b \rbrace$, and pass $h^y$ and $h^b$ through $M_y$ and $M_b$ respectively to obtain $M_y(h^y)$ and $M_b(h^b)$. Then $M(h_i^y, h_j^b)$ is calculated by the cosine similarity between $ M_y(h_i^y)$ and $M_b(h_j^b)$ as
\begin{equation} \label{eq:content}
\begin{split}
     M(h_i^y, h_j^b)=\alpha s_c(i, j) = \alpha \frac{M_y^T(h_i^y) M_b(h_j^b)}{ \Vert M_y(h_i^y) \Vert \Vert M_b(h_j^b) \Vert}
\end{split}
\end{equation}
where $\alpha$ is a learnable scale factor and is initialized as 1 in this paper. We term Eq.~(\ref{eq:ourmi}) with $M$ defined in Eq.~(\ref{eq:content}) as CSAD-Content.

\subsubsection{Local Structural Representation Learning} \label{sec:struc}
We further introduce local structural representation to enrich the capacity of $M$. Our intuition is that for the target representation $h^{y}_i$ of the $i$-th sample, we would like to have its bias not be guessed by its neighbors. In other words, we would like to encourage $h^y$ and $h^b$ to be different in terms of topological structure, and intuitively, such constraint can provide the clue for learning a stronger $M$ for cross-sample mutual information estimation. For simplicity, we only elaborate on how to learn the local structural feature for $h^y$, and a similar procedure applies to $h^b$. 

First, we construct an undirected graph for $h^y$ as $G^{y}=<{V}^y,{E}^{y}>$ over the training samples, here ${V}^y \in \mathbb{R}^{N}$ represents the nodes, ${E}^y \in \mathbb{R}^{N \times N}$ denotes edges, and $N$ is the number of nodes. To obtain the weights for edges ${E}^{y}$, we first  calculate the pair-wise cosine similarity among samples in $h^y$ as
\begin{equation} \label{eq:edgecal}
    s^y_{i,j}= \frac{M_y^T(h_i^y) M_y(h_j^y)}{ \Vert M_y(h_i^y) \Vert \Vert M_y(h_j^y) \Vert}, 
\end{equation}
and then obtain a normalized adjacency matrix by softmax. That is, $e^y_{i,j}$, which is the $(i,j)$-th element of ${E}^{y}$, could then be calculated as 
\begin{equation} \label{eq:normedge}
    e^y_{i,j} = \frac{\exp(\tau s^y_{i,j})}{\sum_j \exp( \tau s^y_{i,j})},
\end{equation}
where $\tau$ is a learnable scale factor and is initialized as $10$ in this paper. With the obtained graph $G^{y}$, we then apply Random Walk with Restart (RWR) to capture the local structure of each sample \cite{zhang2019adaptive,aziere2019ensemble}. Formally, RWR is conducted as:
\begin{equation}
    \label{eq:rwr}
    {r}_i^{t+1,y} = c  {{{E}^y}} {r}_i^{t,y} + (1-c) {a}_i,
\end{equation}
where the $r_i^{t,y} \in \mathbb{R}^{N}$ is the proximity between the $i$-th node to all other nodes at the $t$-th propagation, ${E}^y$ is the edge and defines the transition probability of the propagation, $c$ is the restart probability and is set to $0.5$ in this paper, and $a_i \in \mathbb{R}^{N}$ is the starting vector with the $i$-th element set to 1 and 0 for others. We start with ${r}_i^{0,y}=a_i$ and recursively perform Eq.~(\ref{eq:rwr}) until convergence. The closed-from converged solution is \cite{tong2006fast}
\begin{equation}
    \label{eq:rwr_s}
    {r}_i^{y} = (1-c)({I} - c {{{E}}}_i)^{-1} {a}_i.
\end{equation}
At last, we normalize $r_{i}^y $ as $r_{i,j}^y = \frac{r_{i,j}^y}{\sum_j r_{i,j}^y}$ to have a categorical instance-wise assignment vector. The obtained ${r}_i^{y}$ captures the local structure of $h^y_i$. Likewise, the local structural representation ${r}_i^{b}$ is computed in the same fashion for $h^b_i$.

To obtain the structural similarity $s_s(i,j)$ between $h^y_i$ and $h^b_j$, since $r_i^y$ and $r_j^b$ are categorical vectors, we calculate the inverse symmetric Cross Entropy between them as
\begin{equation}\label{eq:csadstruc}
\begin{split}
    s_s(i,j) =
    \frac{1}{2}((r_i^y)^T\log{r_j^b}) +  (r_j^b)^T\log{r_i^y})).
\end{split}
\end{equation}
We term Eq.~(\ref{eq:ourmi}) with $M$ defined in Eq.~(\ref{eq:csadstruc}), \textit{i.e.}, $M(h_i^y, h_j^b) = \alpha s_s(i, j)$, as CSAD-Struc. Note that it is infeasible to directly calculate the graph with all training data and we construct the graph $G$ with the samples in the mini-batch. 

\subsubsection{Joint Content and Structural Representation Learning} \label{sec:overall}
We jointly use the content and local structural representation for our mutual information estimator $M$ as
\begin{equation}\label{eq:joint}
    M(h_i^y, h_j^b) = \alpha (s_c(h_i^y, s^b_j) + s_s(h_i^y, s^b_j)).
\end{equation}
By jointly adopting the content and local structural representations, we could provide a more comprehensive estimation of the mutual information. We term Eq.~(\ref{eq:ourmi}) with $M$ defined in Eq.~(\ref{eq:joint}) as CSAD. Experimental results show that the joint representation outperforms either CSAD-Content or CSAD-Struc.


\subsection{Training Strategy}
We present pseudo code of CSAD in Algorithm \ref{algdfd} and please refer to Fig. (\ref{fig:dig1d}) for an illustration. We omit the pretraining stages for all components and the algorithm to train mutual information estimator, and please refer to supplemental material for more detail. The number of inner-loop iteration $K$ is set as 10 throughout the paper. The target and bias predictors are trained with cross-entropy loss in this paper but can be directly replaced with other loss functions for general tasks. We note that, from Algorithm \ref{algdfd}, the feature extractor $F$ is updated only to minimize the target prediction loss (Line 4) and to minimize the mutual information Eq.~(\ref{eq:ourmi}) (Line 14), and will never be updated to minimize the bias prediction loss and maximize the Eq.~(\ref{eq:ourmi}). Therefore, after optimized, the representation learned by the feature extractor will only be able to fulfill the target task and have little information on the bias. In our implementation, we propose a hyper-parameter $\lambda$ for Step 4 to achieve a balance between fairness and accuracy.  As the bias branch and mutual information estimator will be discarded at testing time, our method introduces no extra cost for inference. 

\begin{algorithm} 
\caption{Learning Bias-Invariant Representation} 
\label{algdfd} 
\begin{algorithmic}[1]
\REQUIRE Training data $x=\lbrace (x_i,y_i,b_i) \rbrace$;
\WHILE{not converge}
\STATE Sample a minibatch of data;
\STATE \textbf{\# STEP 1: Update Feature Extractor and Target Branch}
\STATE Update \textbf{$F$}, $D^y$, and $P^y$ to minimize the target predication loss;
\STATE \textbf{\# STEP 2: Update Bias Branch}
\FOR{k = 1, \dots, K}
\STATE Update $D^b$ and $P^b$ to minimize the bias predication loss;
\ENDFOR
\STATE \textbf{\# STEP 3: Update MI Estimator}
\FOR{k = 1, \dots, K}
\STATE Update $M$ to maximize Eq.~(\ref{eq:ourmi});
\ENDFOR
\STATE  \textbf{\# STEP 4: Adversarial Debiasing}
\STATE Update \textbf{$F$} to minimize Eq.~(\ref{eq:ourmi});
\ENDWHILE
\end{algorithmic} 
\end{algorithm}
\vspace{-3mm}

\section{Experiments}
In this section, we conduct experiments on various datasets to fully demonstrate the effectiveness of the proposed method. Experimental settings and datasets adopted by debiasing and fairness papers are different from each other, and we mainly follow three different settings as \cite{kim2019learning,bsdfhgdgfh,alvi2018turning}, \cite{nam2020learning} and \cite{yurochkin2019training,yurochkin2020sensei}. We conduct experiments on Colored MNIST~\cite{kim2019learning}, IMDB face~\cite{rothe2015dex}, CelebA~\cite{liu2015faceattributes},  mPower~\cite{bot2016mpower}, and Adult~\cite{asuncion2007uci}. For Colored MNIST, IMDB face, and mPower, we follow the debiasing setting adopted by \cite{kim2019learning,bsdfhgdgfh,alvi2018turning}, for CelebA, we follow the setting adopted by \cite{nam2020learning}, and for Adult, we follow the fairness setting adopted by \cite{yurochkin2019training,yurochkin2020sensei}. Among these datasets, Colored MNIST, IMDB Face, and CelebA are image datasets, mPower is a time series dataset, and Adult is a tabular dataset. We run all experiments three times and report the mean accuracy~\cite{bsdfhgdgfh}. We implement our method with Pytorch and all experiments are run on a Linux machine with a Nvidia GTX 1080 Ti graphic card. 
\begin{table*}[]
\centering
\caption{Evaluation results on the Colored MNIST test set. Smaller variance $\sigma^2$ on training set represents severer bias.}
\begin{tabular}{@{}lccccccc@{}}
\toprule
\multirow{2}{*}{Method} & \multicolumn{7}{c}{Color vairance $\sigma^2$}                    \\ \cmidrule(l){2-8} 
                          &0.020 & 0.025 & 0.030 & 0.035 & 0.040 & 0.045 & 0.050 \\ \midrule
Baseline                  &0.476 & 0.542 & 0.664 & 0.720 & 0.785 & 0.838 & 0.870 \\
Alvi et al. \cite{alvi2018turning}                &0.676 & 0.713 & 0.794 & 0.825 & 0.868 & 0.890 & 0.917 \\
Kim et al. \cite{kim2019learning}                &0.818 & 0.882 & 0.911 & 0.929 & 0.936 & 0.954 & 0.955 \\ 
Ruggero et al. \cite{bsdfhgdgfh}           &0.864 & 0.925 & 0.959 & 0.973 & 0.975 & 0.980 & 0.982 \\ \midrule
AD-JSD (ours)  &{0.896}  &{0.937} & {0.959} &{0.974} & {0.975}      & {0.980}      &{0.980}\\ \hdashline
CSAD-Content (ours)  &{0.933}  &{0.959} & {0.963} &{0.976} & {0.978}      & {0.980}      &{0.983}\\
CSAD-Struc (ours) &0.928 &0.955 &0.967  &0.973 &0.980 &0.981  &0.985\\

CSAD (ours) &\textbf{0.943} & \textbf{0.961} &\textbf{0.970} &\textbf{0.980} &\textbf{0.981} &\textbf{0.982} &\textbf{0.985}\\ \bottomrule
\end{tabular}
\label{tab:colormnistdata}\vspace{-3mm}
\end{table*}
\begin{table}[]
\caption{Evaluation results on the IMDB face dataset for gender prediction. All networks were evaluated with test set and the other training set.}
\centering
\small
\begin{tabular}{@{}lcc||cc@{}}
\toprule
\multirow{2}{*}{Method} & \multicolumn{2}{c}{Trained on EB1} & \multicolumn{2}{c}{Trained on EB2} \\ \cmidrule(l){2-5} 
                        & EB2              & Test            & EB1              & Test            \\ \midrule
Baseline                & 0.5986           & 0.8442          & 0.5784           & 0.6975          \\
Alvi et al. \cite{alvi2018turning}                & 0.6374           & 0.8556          & 0.5733           & 0.6990          \\
Kim et al. \cite{kim2019learning}                & 0.6800           & 0.8666          & 0.6418           & 0.7450          \\
Ruggero et al. \cite{bsdfhgdgfh}                      & 0.6840      & \textbf{0.8720}     & 0.6310       & 0.7450     \\ \midrule
CSAD (ours)                    &  \textbf{0.7038 }           & 0.8696            & \textbf{0.6811}                 &\textbf{0.7865 }            \\ \bottomrule
\end{tabular}
\label{tab:imdb}\vspace{-3mm}
\end{table}
\begin{table}[]
\centering
\caption{Evaluation results on CelebA.}
\begin{tabular}{lcc}
\hline
Mehtod         & Unbaised    & Bias-conflicting   \\ \hline
\multicolumn{3}{c}{Target attribute: \textit{BlondHair}}   \\
Baseline       & 0.7025       & 0.5252              \\
Group DRO \cite{sagawa2019distributionally}      & 0.8424       & 0.8124              \\
LfF \cite{nam2020learning}            & 0.8543       & 0.8340              \\ \midrule
CSAD (ours)     & \textbf{0.8936}       & \textbf{0.8753}              \\ \midrule
\multicolumn{3}{c}{Target attribute: \textit{HeavyMakeup}} \\ 
Baseline       & 0.6200       & 0.3375              \\
Group DRO \cite{sagawa2019distributionally}     & 0.6488       & 0.5024              \\
LfF \cite{nam2020learning}           & 0.6620        & 0.4548              \\ \midrule
CSAD (ours)     & \textbf{0.6788}       & \textbf{0.5344}             \\ \bottomrule 
\end{tabular}
\label{tab:celeba}\vspace{-3mm}
\end{table}
\begin{table}[]
\centering
\caption{Evaluation results on the mPower tapping dataset.}
\begin{tabular}{@{}lccccccc@{}}
\toprule
Method     & AUC   & AP   & F1    \\ \midrule
Baseline     & 0.735   & 0.419 & 0.553 \\
Kim et al. \cite{kim2019learning} & 0.759   & 0.424 & 0.572 \\ \midrule
CSAD (ours)       & \textbf{0.772}  & \textbf{0.434} & \textbf{0.581} \\ \bottomrule
\end{tabular}
\label{tab:mpowerresultsd}\vspace{-3mm}
\end{table}

\subsection{Colored MNIST}
The Colored MNIST dataset~\cite{kim2019learning} introduces color bias to the standard MNIST dataset~\cite{lecun2010mnist}, and the digits are class-wisely colored for the training set following~\cite{kim2019learning}. Smaller $\sigma$ means more severely biased training data. We compare CSAD with other debaising methods, including \cite{alvi2018turning}, \cite{kim2019learning} and \cite{bsdfhgdgfh}. The results of other methods are retrieved from their papers. In addition, we compare our approach with three ablation models of our methods, namely AD-JSD, CSAD-Content and CSAD-Struc. For AD-JSD, we adopt the disentanglement framework of CSAD but with $I_{JSD}$ for mutual information estimation Eq.(\ref{eq:mijsd}) instead of the proposed $I_{CS}$ Eq.(\ref{eq:ourmi}) with the content representation Eq.(\ref{eq:content}). For CSAD-content and CSAD-struc, we use content Eq.(\ref{eq:content}) or structural Eq.(\ref{eq:csadstruc}) features only for mutual information estimation. For CSAD, we adopt the proposed $I_{CS}$ with the joint features Eq.~(\ref{eq:joint}). For this dataset, $(i,j) \in \Omega$ if the difference between their color is equal to or smaller than 1 for each channel.



\begin{table*}[!htbp]
\centering
\caption{Income prediction of Adult dataset. * denote the results are reproduced by ourselves with a three-layer MLP.}
\begin{tabular}{@{}lcccccccccccc@{}}
\toprule
               & BA$\uparrow$   & S-Con$\uparrow$ & GR-Con$\uparrow$ & $\text{Gap}_G^{\text{RMS}}\downarrow$ & $\text{Gap}_R^{\text{RMS}}\downarrow$ & $\text{Gap}_G^{\text{max}}\downarrow$ & $\text{Gap}_R^{\text{max}}\downarrow$ \\ \midrule
Baseline       & \textbf{82.9} & .848  & .865   & .179      & .089      & .216      & .105      \\
Baseline*  & 82.7 & .844  & .831   & .182      & .087      & .212      & .110      \\
Project \cite{yurochkin2019training}       & 82.7 & .868  & \textbf{1.00}   & .145      & .064      & .192      & .086      \\
Adv. debiasing \cite{zhang2018mitigating} & 81.5 & .807  & .841   & .082      & .070      & .110      & .078      \\
CoCL \cite{de2019bias}           & 79.0 & -     & -      & .163      & .080     & .201      & .109      \\
SenSR \cite{yurochkin2019training}          & 78.9 & .934  & .984   & .068      & .055      & .087      & .067      \\
SenSeI \cite{yurochkin2020sensei}         & 76.8 & \textbf{.945}  & .963   & \textbf{.043}     & .054      & \textbf{.053}      & .064      \\ \midrule
CSAD (ours)           & 80.4 & {.938}      & .975       & .065     & \textbf{.042}     & .073       & \textbf{.058}      \\ \bottomrule
\end{tabular}
\label{tab:adult}
\vspace{-3mm}
\end{table*}

According to Table~\ref{tab:colormnistdata}, all variants of CSAD outperform the existing approaches with different $\sigma^2$. Notably, our model achieves even more significant improvement for severely biased datasets (smaller $\sigma^2$), demonstrating the effectiveness of the disentanglement framework and the necessity to consider cross-sample information. Moreover, the proposed $I_{CS}$ outperforms $I_{JSD}$ by comparing AD-JSD and CSAD-Content with AD-JSD, showing the superiority of the proposed mutual information estimator Eq.~(\ref{eq:ourmi}). Additionally, the proposed CSAD-Struc performs slightly better than CSAD-Content, and CSAD with the joint representation performs favourably against either CSAD-Content or CSAD-Struc, showing that 1) the structural representation could benefit the learning process for adversarial debiasing and 2) jointly considering the content and structural features could lead to better performance.


\subsection{IMDB face}
The IMDB face dataset~\cite{rothe2015dex} is a face image dataset. Following \cite{kim2019learning,bsdfhgdgfh}, the images are divided into three subsets, namely: Extreme bias 1 (EB1): women aged 0-29, men aged 40+; Extreme bias 2 (EB2): women aged 40+, men aged 0-29; Test set: 20\% of the cleaned images aged 0-29 or 40+. As a result, EB1 and EB2 are biased towards the age, since EB1 consists of younger females and older males and EB2 consists of younger males and older females. We adopt a pretrained ResNet18 \cite{he2016deep} on ImageNet \cite{deng2009imagenet} following \cite{bsdfhgdgfh,kim2019learning} as the feature extractor. Moreover, we freeze the BN layer to stabilize the training process. The $(i,j) \in \Omega$ if they share same bias label. We train our models on EB1 (EB2) and evaluate the trained model on EB2 (EB1) and the testing set. Table~\ref{tab:imdb} shows the prediction results. The biased training samples pose a serious threat to the baseline method and make the obtained model generalize poorly on unseen data. By contrast, the models obtained by debiasing methods are much more robust with a age-invariant representation, and the proposed CSAD achieves a much better performance. 


\subsection{CelebA}
CelebA dataset contains 40 attributes of face images. We follow Nam \textit{et al.} to conduct experiments on the official training (162770 samples) and validation (19867 samples) set  to respectively predict \textit{BlondHair} and \textit{HeavyMakeup} with the bias attribute as \textit{Male} \cite{nam2020learning}. To evaluate the performance, we construct unbiased set and bias-conflicting set from original validation set. Following \cite{nam2020learning}, the unbiased set is constructed with all validation data, and we report a weighted average accuracy based on the target-bias pairs. The bias-conflicting set is constructed by the data which have same target and bias values, e.g., \textit{BlondHair-Male} as there are few male with \textit{BlondHair} in the training set. $(i,j) \in \Omega$ if they share same bias label. The results are shown in Table \ref{tab:celeba}. We adopt a pretrained Resnet-18 as the feature extractor with frozen BN layer. We compare our method with Group DRO \cite{sagawa2019distributionally} and LfF \cite{nam2020learning}. Based on the results, our method outperforms other approaches on different scenarios.  



\subsection{mPower}
The mPower is collected to develop a smartphone-based remote diagnosis system for PD's patients. Subjects are required to conduct well-designed activities which could reveal the PD's symptoms. Here we conduct adversarial debiasing for the finger tapping task, where patients will tap their phones alternatively with two fingers. The mPower data has a clear bias on age, and detail statistics are provided in supplemental material. We compare our method against baseline and \cite{kim2019learning}. $(i,j) \in \Omega$ if they share same bias label. As shown in the Table \ref{tab:mpowerresultsd}, our model improves over the results of the baseline and \cite{kim2019learning}, suggesting that the learned representation of our method is more robust to the age bias. 

\subsection{Adult}
The Adult dataset is a commonly used benchmark in the algorithmic fairness literature~\cite{yurochkin2019training}. The task is to predict if a subject earns more than \$50k per year with the attributes including education, gender, race, etc. We aim to learning an income predictor that is invariant to gender and race which are protected attributes \cite{yurochkin2019training}. We first preprocess the dataset following \cite{yurochkin2019training}, and split the data into 80\% training and 20\% testing. We compare our method with advanced fairness methods including Project \cite{yurochkin2019training}, CoCL \cite{de2019bias}, Adversarial Debiasing \cite{zhang2018mitigating}, SenSR \cite{yurochkin2019training}, and SenSeI \cite{yurochkin2020sensei}. The results of compared methods are duplicated from \cite{yurochkin2020sensei}, and are obtained with a two-layer MLP for the target task. However, recall that CSAD contains three different modules for target task including $F$, $D^y$, and $P^y$, which make the two-layer MLP not applicable for our method, we thus construct a three-layer MLP which has similar performance as shown in Table \ref{tab:adult}.  $(i,j) \in \Omega$ if they share same bias label. The results reported in Table \ref{tab:adult} is averaged over 10 different train/val splits following \cite{yurochkin2019training}. We use seven evaluation metrics, and all of them are for fairness evaluation except Balanced Accuracy~(BA)~\cite{yurochkin2019training}. Refer to~\cite{yurochkin2019training} or supplemental material for detail. Overall, according to Table \ref{tab:adult}, the proposed CSAD outperforms the state-of-the-art fairness methods that are specifically trained with the fairness metric.  We note that although SenSeI \cite{yurochkin2020sensei} seems to achieve a better performance in terms of the fairness, this is at the expense of a significant balanced accuracy drop (6.1\%) and thus may be less impractical for real life applications. By contrast, CSAD obtains a state-of-the-art performance in terms of both individual and group fairness metrics with a relatively small balanced accuracy drop (2.3\%). 

\section{Conclusions}
This paper presents an adversarial debiasing method named CSAD. CSAD is built on a novel disentanglement framework composed of six learnable modules that can respectively extract target and bias features form the input. Then, we adversarially learn to mine and remove the correlation between the target and bias features, and the correlation is measurement by a cross-sample mutual information estimator. We further boost CSAD with a joint structural and content representation. At last, we show a carefully designed training strategy to obtain the debiased model. To validate the effectiveness of the proposed method, we conduct experiments on five datasets with three debiasing benchmark settings.The results show the superior performance of CSAD for various tasks. In the future, we will extend CSAD to deal with incomplete or noisy labels and investigate the explainability and the fairness-accuracy trade-off. 



{\small
\bibliographystyle{ieee_fullname}
\bibliography{debiasing}
}
\clearpage
\newpage
\begin{strip}
\begin{center}
\textbf{\large Supplemental Materials: Learning Bias-Invariant Representation by Cross-Sample Mutual Information Minimization}
\end{center}
\end{strip}
\section{Discussion on $I_{CS}$ Eq. (2)}
The proposed $I_{CS}$ is a lower bound for $I_{JSD}$ with same definition of joint distribution and product of marginal distribution. Specifically, following the definition used in $I_{CS}$,  $I_{JSD}$ could be rewritten as 
\begin{equation*}
\begin{split}
        I_{JSD}(h^y,h^b) = & \sup -E_{(i,j)\in \Omega} \log(1+\exp(-M(h^y_i, h^b_j))) \\
                          & - E_{(i,j) \notin \Omega} \log(1+\exp(M(h^y_i, h^b_j))) \\
\end{split}
\end{equation*}

\newtheorem{Theorem}{Lemma}
\begin{Theorem}
$I_{CS}$ is a lower bound for $I_{JSD}$.
\end{Theorem}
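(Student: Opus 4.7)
The plan is to show the inequality term-by-term by applying Jensen's inequality to the two expectations appearing in $I_{JSD}$, and then to note that the resulting bound survives taking the supremum over $M$.

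First I would fix an arbitrary neural network $M$ and compare the integrands of $I_{JSD}$ and $I_{CS}$ for this fixed $M$. The key analytic fact is that the scalar function $\phi(y) = -\log(1+y)$ is convex on $(0,\infty)$ (its second derivative is $1/(1+y)^2 > 0$), as the paper already signals. Applied to the random variable $Y = \exp(-M(h^y_i, h^b_j))$ with $(i,j)$ drawn from $\Omega$, Jensen's inequality gives
\begin{equation*}
E_{(i,j)\in \Omega}\bigl[\phi(Y)\bigr] \;\geq\; \phi\bigl(E_{(i,j)\in \Omega}[Y]\bigr),
\end{equation*}
that is,
\begin{equation*}
-E_{(i,j)\in \Omega}\log\!\bigl(1+\exp(-M(h^y_i,h^b_j))\bigr) \;\geq\; -\log\!\Bigl(1 + E_{(i,j)\in \Omega}\exp(-M(h^y_i,h^b_j))\Bigr).
\end{equation*}
The same argument with $Y = \exp(M(h^y_i,h^b_j))$ and $(i,j) \notin \Omega$ yields
\begin{equation*}
-E_{(i,j)\notin \Omega}\log\!\bigl(1+\exp(M(h^y_i,h^b_j))\bigr) \;\geq\; -\log\!\Bigl(1 + E_{(i,j)\notin \Omega}\exp(M(h^y_i,h^b_j))\Bigr).
\end{equation*}

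Next I would sum the two inequalities. The left-hand side is precisely the objective inside the supremum in the rewritten $I_{JSD}$, and the right-hand side is precisely the objective inside the supremum in $I_{CS}$. Therefore, for every admissible $M$, the $I_{JSD}$ objective dominates the $I_{CS}$ objective pointwise in $M$. Taking the supremum over $M$ on both sides preserves the inequality (supremum is monotone with respect to pointwise domination), giving $I_{CS}(h^y,h^b) \leq I_{JSD}(h^y,h^b)$.

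There is no real obstacle here: the only mild subtlety is that the supremum is taken over the same class of networks $M$ on both sides and the Jensen bound is applied with the same $M$, so the monotonicity step is clean. I would close by remarking that equality occurs in Jensen's inequality when $\exp(\pm M(h^y_i,h^b_j))$ is almost surely constant on each of $\Omega$ and its complement, which is generally not the case in practice, so $I_{CS}$ is typically a strictly looser bound on mutual information than $I_{JSD}$, consistent with the observation in the paper that a tighter bound does not necessarily translate into better debiasing.
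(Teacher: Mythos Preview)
Your proposal is correct and follows essentially the same approach as the paper: both apply Jensen's inequality to the convex function $-\log(1+x)$ termwise to the positive-pair and negative-pair expectations. Your version is slightly more thorough in that you explicitly handle the supremum over $M$ via monotonicity (which the paper's proof leaves implicit) and add the equality-case remark, but the underlying argument is identical.
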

\begin{proof}
Since $-\log(1+x)$ is convex, based on Jensen's inequality, we know
\begin{equation*} 
\begin{split}
 -E_{(i,j)\in \Omega} \log(1+g_{(i,j)}) \geq  - \log(1+E_{(i,j)\in \Omega} g_{(i,j)}).
\end{split}
\end{equation*}
By replacing $g_{(i,j)}$ with $\exp(-M(h_i^y, h_j^b))$ and $\exp(M(h_i^y, h_j^b))$ respectively, and substituting them to $I_{CS}$ and $I_{JSD}$, we directly get $I_{CS} \leq I_{JSD}$. We thus complete our proof.
\end{proof}

\section{Algorithm for CSAD}
CSAD requires pretraining for target classifier, bias classifier, and mutual information estimator. We provide the complete algorithm for CSAD in Algorithm \ref{algdfdsupp}.

\begin{algorithm} 
\caption{Learning Bias-Invariant Representation} 
\label{algdfdsupp} 
\begin{algorithmic}[1]
\REQUIRE Training data $x=\lbrace (x_i,y_i,b_i) \rbrace$;
\STATE \textbf{\# STEP 1: Pretrain Feature Extractor and Target Branch}
\STATE Pretrain  \textbf{{$F$}},  $D^y$, and $P^y$ by minimizing the target prediction loss until convergence; 
\STATE \textbf{\# STEP 2: Pretrain Bias Branch}
\STATE Pretrain  $D^b$ and $P^b$ by minimizing the bias prediction loss until convergence; 
\STATE \textbf{\# STEP 3: Pretrain Mutual Information Estimator}
\STATE Pretrain $M$ to maximize Eq. (2) until convergence;
\STATE \textbf{\# STEP 4: Iteratively Update}
\WHILE{not converge}
\STATE Sample a minibatch of data;
\STATE Update \textbf{$F$}, $D^y$, and $P^y$ to minimize the target predication loss;
\FOR{k = 1, \dots, K}
\STATE Update $D^b$ and $P^b$ to minimize the bias predication loss;
\ENDFOR
\FOR{k = 1, \dots, K}
\STATE Update $M$ to maximize Eq. (2);
\ENDFOR
\STATE  \textbf{\# Adversarial Debiasing}
\STATE Update \textbf{$F$} to minimize Eq. (2);
\ENDWHILE
\end{algorithmic} 
\end{algorithm}
\section{Algorithm for Optimizing Eq. (2)}
We provide the details on optimizing Eq. (2) with joint content and local structural learning in Algorithm \ref{algdfd2}.
\begin{algorithm} 
\caption{Cross sample Mutual Information Estimator $M$} 
\label{algdfd2} 
\begin{algorithmic}[1]
\REQUIRE Target representation $h^y=\lbrace h_i^y  \rbrace$, Bias representation $h^b=\lbrace h_i^b  \rbrace$;

\STATE {\textbf{STEP I: Content Similarity Learning}}
\STATE Calculate content similarity as Eq. (3)  
\STATE \textbf{STEP II: Structural Similarity Learning}
\STATE Calculate the pairwise similarity matrices for $h^y$ and $h^b$ by Eq. (4), and then normalize the matrices by Eq. (5) to obtain the edge $E^y$ and $E^b$.
\STATE Conduct RWR on the obtained graph $G^y$ and $G^b$ respectively to obtain $r_i^y$ and $r_i^b$ for the $i$-th sample by Eq. (7);
\STATE Normalize the $r_i^b$ and $r_i^y$ for the $i$-th sample;
\STATE Calculate the structural similarity $s_s^y(i,j)$ by Eq. (8);
\STATE \textbf{STEP III: Joint Similarity Learning}
\STATE Obtain the joint similarity by Eq. (9);
\STATE \textbf{STEP IV: Cross Sample Mutual Information Estimation}
\STATE Update $M$ to maximize Eq. (2).
\end{algorithmic} 
\end{algorithm}

\section{Dataset and Network Structure}
We provide detail structure of used network for different datasets. We omit the activation function (ReLU Layer) of the network for convenience. 

\subsection{Colored Mnist}

The Colored MNIST dataset~\cite{kim2019learning} introduces color bias to the standard MNIST dataset~\cite{lecun2010mnist}, and the digits are class-wisely colored for the training set following~\cite{kim2019learning}. We assign a mean color for each class of digit. Then, for each training image, its color is sampled from a normal distribution with the mean set as the class-wise mean color and a predefined variance $\sigma^2$. We vary the variance $\sigma^2$ from $0.02$ to $0.05$ to have a different amount of bias in the training data, and smaller $\sigma^2$ represents more color bias. To have a bias-free testing set, testing images are generated similarly to the training ones but with the mean randomly sampled from ten mean colors. The color label is grouped into eight different categories for each RGB channel following \cite{kim2019learning}. 

The used network for target and bias task follows \cite{bsdfhgdgfh}. For detail we adopt two convolutional layers with kernel size as 5 and 64 filters as feature extractor. The disentangler is implemented with a fully connected layer (1024-128). Class Predictor is implemented with two fully connected layer as (128-64-10). Bias  Predictor is also implemented with two fully connected layer for each color as (128-64-8). The mutual information estimator is a three-layer fully connected network as (128-64-32-32).

\begin{figure}
\centering
\includegraphics[width=0.45\textwidth]{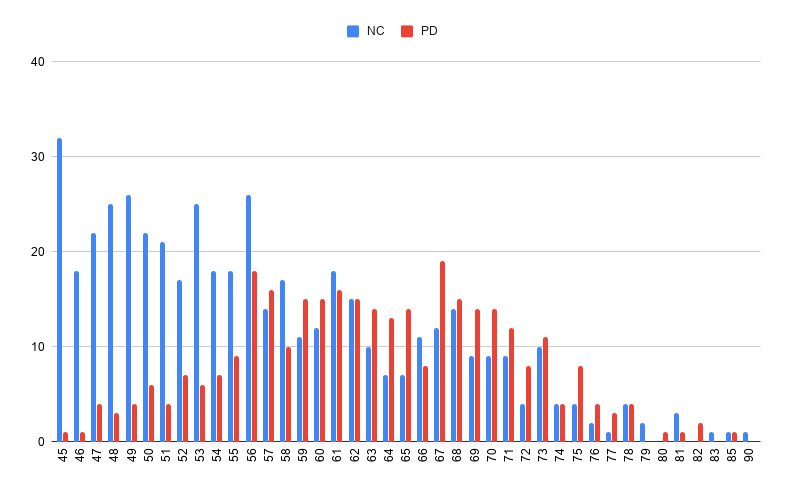}
\caption{The age distribution of PD and non-PD groups for the mPower dataset. The dataset is biased on age, as PD patients are typically elder.}
\label{fig:mpowertappigndf}
\end{figure}
\subsection{IMDB face}
The IMDB face dataset~\cite{rothe2015dex} is a face image dataset that contains $460,723$ face images of $20,284$ celebrities with their information regarding age and gender. Following~\cite{bsdfhgdgfh}, a pretrained network on Image~\cite{hassner2015age} for age and gender annotation is used to filter out the misannotated images, resulting a cleaned dataset of 112,340 samples. We aim to conduct gender prediction on an age-biased training set. Likewise, the cleaned images are divided into three subsets, namely: Extreme bias 1 (EB1): women aged 0-29, men aged 40+; Extreme bias 2 (EB2): women aged 40+, men aged 0-29; Test set: 20\% of the cleaned images aged 0-29 or 40+. As a result, EB1 and EB2 are biased towards the age, since EB1 consists of younger females and older males and EB2 consists of younger males and older females.

Follow \cite{kim2019learning}, feature extractor is implemented with a pretrained ResNet-18 by modifying the last fc layer as (512-256). The disentangler is a fully connected layer as (256-64). The class predictor is a fully connected layer as (64-1) and the bias predictor is also a fully connected layer as (64-12). The mutual information estimator is a three-layer fully connected network as (64-64-32-32).

\subsection{CelebA}

Follow \cite{nam2020learning}, feature extractor is implemented with a pretrained ResNet-18 by modifying the last fc layer as (512-256). The disentangler is a fully connected layer as (256-64). The class predictor is a fully connected layer as (64-1) and the bias predictor is also a fully connected layer as (64-1). The mutual information estimator is a three-layer fully connected network as (64-64-32-32). We train the model with a balanced batch to alleviate the unbalance problem.

\subsection{mPower}
We illustrate the age bias for mPower dataset in Fig. \ref{fig:mpowertappigndf}. We can observe that most PD's patients are the elder, and it is thus necessary to remove the age bias for PD's diagnosis. We conduct adversarial debiasing for the finger tapping task, where patients will tap their phones alternatively with two fingers. To evaluate the debiasing methods, we contrive a bias-free testing set following the settings of Colored MNIST and IMDB face. For detail, we divide the age into 6 different intervals $\lbrace 45-49, 50-54, 55-59, 60-67, 65-69, 70+\rbrace$, and then draw 30 PD and NC subjects from each interval as the testing set (360 samples in total). The training set contains all the other 1044 patients. 

Feature extractor is implemented with a 6-layer TCN with kernel size as 5 and 64 filters. The disentangler is a fully connected layer as (64-64). The class predictor is a fully connected layer as (64-1) and the bias predictor is also a fully connected layer as (64-8). The mutual information estimator is a three-layer fully connected network as (64-64-32-32).

\subsection{Adult}
To comprehensively evaluate the performance, various metrics have been applied following \cite{yurochkin2019training}. First, Balanced accuracy (BA) is used for imbalance data. Moreover, we evaluate the model with counterfactual samples by flipping the attributes of spouse (gender\&Race) for testing records, and calculate spouse (gender\&Race) consistency S-Con (GR-Con) by the predication consistency between original and altered samples \cite{yurochkin2019training}. We also report group fair metrics provided by AIF360 \cite{aif360} with respect to race or gender, including $\text{Gap}_G^{\text{RMS}}$, $\text{Gap}_R^{\text{RMS}}$, $\text{Gap}_G^{\text{max}}$, and $\text{Gap}_R^{\text{max}}$,
and please refer to \cite{yurochkin2019training} for their definitions.

The baseline used by other methods is a two-layer MLP as (41-100-2) \cite{yurochkin2019training}. We adopt a three-layer MLP as (41-64-32-2). We note that the three-layer MLP has fewer parameters than the two-layer baseline and achieves competitive performance. For the proposed methods, we discompose the three-layer baseline into three modules. Feature extractor is a fully connected layer as (41-64). The disentangler is a fully connected layer as (64-32). The class predictor is a fully connected layer as (32-1) and the bias predictor is also a fully connected layer as (32-2). The mutual information estimator is a two-layer fully connected network as (32-32-32). We construct a balanced minibatch for Adult which contains same number of samples from each target class following \cite{yurochkin2019training}.

\section{Implementation Details}
We use Adam to train our model~\cite{kingma2014adam}. We set $K=10$, $\tau=10$ Eq.(5), $c=0.5$ Eq. (6), and $\alpha=1$ Eq. (9), and search $\lambda \in \lbrace 0.1, 0.5,  1, 10 \rbrace$.
We would like to emphasize that $\lambda$ is particularly important for our method to achieve a fairness-accuracy balance. Specifically, over large $\lambda$ would force the feature extractor to learn little information, and small ones would lead to little influence on the pretrained target classifier. We conduct experiments on Adult to show the trade-off for our method in Table \ref{tab:fairacc}. The $\lambda$ is default set to 10 for Adult. As we increase $\lambda$, our model focuses more on fairness with reduced accuracy. We note that the fairness-accuracy trade-off is still an open and significant problem for debiasing and fairness~\cite{liu2020accuracy,dutta2020there}. We will study the problem comprehensively in the future.

\begin{table}[!htbp]
\caption{We vary the $\lambda$ to study the accuracy-fairness trade-off of our method on Adult dataset. BA is balanced accuracy, while the other four metrics are used to evaluate the fairness. }
\resizebox{1\columnwidth}{!}{
\begin{tabular}{@{}lccccc@{}}
\toprule
$\lambda$        & BA $\uparrow$ & $\text{Gap}_G^{\text{RMS}}\downarrow$ & $\text{Gap}_R^{\text{RMS}}\downarrow$ & $\text{Gap}_G^{\text{max}}\downarrow$ & $\text{Gap}_R^{\text{max}}\downarrow$ \\ \midrule
1          & \textbf{81.4} & .118    & .061    & .130    & .067    \\
2          & 80.7 & .080    & .053    & .109    & .055    \\
\textbf{10} & 80.4 & .060    & .042    & .066    & .058    \\
20          & 78.9 & \textbf{.058}    & .035    & \textbf{.065}    & .050    \\
40          & 78.5 & .063    & \textbf{.030}    & .088    & \textbf{.042}    \\ \bottomrule
\end{tabular}
}
\label{tab:fairacc}
\end{table}

\end{document}